%% file: main.tex
\documentclass{article}
\usepackage{kr}

\usepackage{times}
\usepackage{soul}
\usepackage{url}
\usepackage[hidelinks]{hyperref}
\usepackage[utf8]{inputenc}
\usepackage[small]{caption}
\usepackage{graphicx}
\usepackage{amsmath}
\usepackage{amsthm}
\usepackage{booktabs}
\usepackage{algorithm}
\usepackage{algorithmic}

\usepackage{xspace}
\usepackage{amssymb}
\usepackage{tikz}
\usepackage{multirow}
\usepackage{placeins}
\usepackage{enumitem}

\usepackage[most]{tcolorbox}

\newtheorem{theorem}{Theorem}

\newtheorem{Definition}{Definition}
\newtheorem{corollary}{Corollary}

\title{From Subsumption to Satisfiability: LLM-Assisted Active Learning for OWL Ontologies}

\author{%
Haoruo Zhao$^1$\and
Wenshuo Tang$^1$\and
Duncan Guthrie$^1$\and
Michele Sevegnani$^1$\and
David Flynn $^1$\and
Paul Harvey $^1$ \\
\affiliations
$^1$University of Glasgow\\
\emails
\{Haoruo.Zhao, Wenshuo.Tang\}@glasgow.ac.uk,
2266457G@student.gla.ac.uk,
\{Michele.Sevegnani, David.Flynn,Paul.Harvey\}@glasgow.ac.uk
}

\begin{document}

\maketitle

\input{macro}

\begin{abstract}

In active learning, membership queries (MQs) allow a learner to pose questions to a teacher, such as ``Is every apple a fruit?”, to which the teacher responds correctly with ``yes” or ``no”. These MQs can be viewed as subsumption tests with respect to the target ontology. Inspired by the standard reduction of subsumption to satisfiability in description logics, we reformulate each candidate axiom into its corresponding counter-concept and verbalise it in controlled natural language before presenting it to Large Language Models (LLMs). We introduce LLMs as a third component that provides real-world examples approximating an instance of the counter-concept. This design property ensures that only Type II errors may occur in ontology modelling; in the worst case, these errors merely delay the construction process without introducing inconsistencies. Experimental results on 13 commercial LLMs show that recall, corresponding to Type II errors in our framework, remains stable across several well-established ontologies.

\end{abstract}

\input{Introduction}

\input{BackgroundKnowledge}

\input{Method}
\input{Evaluation}
\input{Conclusion}



\FloatBarrier

\bibliographystyle{kr}
\bibliography{Reference/order,Reference/biblio-clean,Reference/short-string,Reference/background-extra}

\end{document}

%% file: macro.tex

\newtheorem{claim}{Claim}

\newcommand{\Ont}{\mathcal{O}}
\newcommand{\OntN}[1]{\mathcal{O}_{#1}}
\newcommand{\AtomN}[1]{\mathfrak{a}_{#1}}
\newcommand{\Atom}[1]{\mathfrak{#1}}
\newcommand{\axiom}{\alpha}
\newcommand{\sigaxiom}{\widetilde{\alpha}}

\newcommand{\NExptime}{\textsc{NExpTime}}

\newcommand{\R}{\ensuremath{\mathcal{R}}\xspace}
\newcommand{\ConceptName}{\textbf{N}_\textbf{C}}
\newcommand{\RoleName}{\textbf{N}_\textbf{R}}
\newcommand{\ConceptNameA}[1]{\mathsf{CanS}(#1)}
\newcommand{\CandiC}{\mathsf{CanS}}

\newcommand{\Inv}{\mathop{\mathsf{Inv}}}
\newcommand{\Irr}{\mathop{\mathsf{Irref}}}

\newcommand{\Refl}{\mathop{\mathsf{Ref}}}

\newcommand{\Sym}{\mathop{\mathsf{Sym}}}
\newcommand{\Asym}{\mathop{\mathsf{Asym}}}
\newcommand{\Dis}{\mathop{\mathsf{Disj}}}

\newcommand{\Protege}{Prot\'{e}g\'{e}\xspace} 

\newcommand{\ConOnt}[1]{\mathsf{C}(#1)}

\newcommand{\At}[1]{\mathsf{Ats}(#1)}
\newcommand{\Atwithout}[1]{\mathsf{At}(#1)}
\newcommand{\Ats}[2]{$\mathsf{Ats}(#1, {#2})$\xspace}

\newcommand{\MinAt}[1]{\mathsf{MinAts}(#1)}
\newcommand{\MinAtwithout}[1]{\mathsf{MinAts}(#1)}

\newcommand{\SigOnt}{\widetilde{\mathcal{O}}}
\newcommand{\SigModule}[2]{\widetilde{\mathcal{M}}( #1, #2) }
\newcommand{\SigM}{\widetilde{\mathcal{M}}}
\newcommand{\moduleSig}[1]{\widetilde{\module_{#1}}}
\newcommand{\module}{\mathcal{M}}
\newcommand{\Module}[2]{\mathcal{M}( #1, #2)}
\newcommand{\MOverhead}{\module_3}

\newcommand{\MA}{\mathcal{M}_{A}}
\newcommand{\SigMA}{\widetilde{\mathcal{M}_{A}}}
\newcommand{\SigPricinpalAtom}{\widetilde{\downarrow\!\mathfrak{a}}}
\newcommand{\SigPricinpalatom}[1]{\widetilde{\downarrow\!\mathfrak{a}_{#1}}}
\newcommand{\SigPricinpalatomL}[1]{\widetilde{\downarrow\!\mathfrak{#1}}}
\newcommand{\PricinpalAtom}{\mathop{\downarrow\!\mathfrak{a}}}
\newcommand{\Pricinpalatom}[1]{\mathop{\downarrow\!\mathfrak{a}_{#1}}}
\newcommand{\PricinpalatomL}[1]{\mathop{\downarrow\!\mathfrak{#1}}}

\newcommand{\PricinpalAtomN}[1]{\mathord{\downarrow}\mathfrak{a}_{#1}}

\newcommand{\PBFBotPolarityFormWithout}[2]{\ensuremath{\varphi^{\bot}_{#1} ({#2})}\xspace}
\newcommand{\PBFTopPolarityFormWithout}[2]{\ensuremath{\varphi^{\top}_{#1} ({#2})}\xspace} 

\newcommand{\SigAtom}{\widetilde{\mathfrak{a}}}
\newcommand{\Sigatom}[1]{\widetilde{\mathfrak{#1}}}
\newcommand{\SigatomN}[1]{\widetilde{\mathfrak{a}_{#1}}}

\newcommand*{\circled}[1]{\lower.7ex\hbox{\tikz\draw (0pt, 0pt)%
    circle (.5em) node {\makebox[1em][c]{\small #1}};}}

\newcommand{\ConS}{\textbf{Con}(\Sigma)\xspace}

\newcommand{\AD}{\mathfrak{A}(\mathcal{O})}
\newcommand{\ADOther}[1]{\mathfrak{A}{#1}}
\newcommand{\ADN}[1]{\mathfrak{A}(\mathcal{O}_{#1})}

\renewcommand{\mod}[1]{\ensuremath{#1\textup{-mod}}} 
\newcommand{\PBFDletaAxiomSigWithout}{\ensuremath{\varphi^{\delta}_{\Sigma}(\alpha)}\xspace}

\newcommand{\HAtoms}{\textbf{HA}(\Ont)}
\newcommand{\BTop}{\mathsf{BTop}(\Ont)}
\newcommand{\ConceptSet}{\mathsf{ConSet}}
\newcommand{\CheckedAtoms}{\mathsf{CheckedAtoms}}

\newcommand{\SetModule}{\mathcal{S}_{\mathcal{M}}}

\newcommand{\HighestAtoms}[1]{\mathsf{HAtoms}(#1)}
\newcommand{\LowestAtoms}[1]{\mathsf{LAtoms}(#1)}

\newcommand{\scname}[1]{\textsc{#1}}
\newcommand{\klone}{\scname{Kl-One}}
\newcommand{\Classic}{\scname{Classic}}
\newcommand{\Kris}{\scname{Kris}}

\newcommand{\DLL}{\mathcal{L}}
\newcommand{\dlel}{\mathcal{EL}}
\newcommand{\dlalc}{\mathcal{ALC}}

\newcommand{\dlelplus}{\ensuremath{\mathcal{EL}^{{+}{+}}}}
\newcommand{\cardinalitySigOnt}{\#\widetilde{\ont}}
\newcommand{\cardinalityOnt}{\#\ont}

\newcommand{\PBFDeltaEvaluationAtom}[1]{\varphi^{\delta}_{\Sigma}(#1)}

\newcommand{\PBFDeltaConcept}[1]{\varphi^{\delta}(#1)}

\newcommand{\PBFDeltaPositiveEvaluationC}[1]{\varphi^{\delta}_{+,\Sigma}(#1)}
\newcommand{\PBFBotPositiveEvaluationC}[1]{\varphi^{\bot}_{+,\Sigma}(#1)}
\newcommand{\PBFTopPositiveEvaluationC}[1]{\varphi^{\top}_{+,\Sigma}(#1)}

\newcommand{\PBFDeltaNegativeEvaluationC}[1]{\varphi^{\delta}_{-,\Sigma}(#1)}
\newcommand{\PBFBotNegativeEvaluationC}[1]{\varphi^{\bot}_{-,\Sigma}(#1)}
\newcommand{\PBFTopNegativeEvaluationC}[1]{\varphi^{\top}_{-,\Sigma}(#1)}

\newcommand{\sroiq}{\ensuremath{\mathcal{SROIQ}}\xspace}
\newcommand{\shiq}{\ensuremath{\mathcal{SHIQ}}\xspace}
\newcommand{\shiqD}{\ensuremath{\mathcal{ALCIQ(D)}}\xspace}

\newcommand{\Tr}{\mathop{\mathsf{Trans}}}
\newcommand{\Fn}{\mathop{\mathsf{Func}}}

\newcommand{\interpretation}{\mathcal{I}}
\newcommand{\Interpretation}[1]{\mathcal{#1}}

\newcommand{\Res}{\mathsf{Res}}

\newcommand{\MORe}{\mathsf{MORe}}
\newcommand{\Chainsaw}{\mathsf{Chainsaw}}
\newcommand{\HermiT}{\mathsf{HermiT}}
\newcommand{\Pellet}{\mathsf{Pellet}}
\newcommand{\ELK}{\mathsf{ELK}}
\newcommand{\Factplus}{\mathsf{FaCT++}}
\newcommand{\JFact}{\mathsf{JFact}}
\newcommand{\Fact}{\mathsf{Fact}}
\newcommand{\Racer}{\scname{Racer}}
\newcommand{\CEL}{\scname{CEL}}
\newcommand{\Konclude}{\mathsf{Konclude}}
\newcommand{\Sequoia}{\mathsf{Sequoia}}
\newcommand{\CB}{\mathsf{CB}}
\newcommand{\Cr}{\mathsf{Crane}}

\newcommand{\ReAD}{\mathsf{ReAD}}
\newcommand{\ReADFeature}[1]{\mathsf{ReAD_{#1}}}
\newcommand{\Crane}{\mathsf{ReAD_{Crane}}}

\newcommand{\sigL}{\Sigma_{\mathsf{EL}}}
\newcommand{\ontL}{\Ont_{\mathsf{EL}}}
\newcommand{\ontRe}{\Ont_{\mathsf{RAs}}}

\newcommand{\sameAtom}{\mathsf{inSameAtom}}
\newcommand{\restCheck}{\mathsf{restCheckList}}

\newcommand{\MEL}{\module_{\mathsf{EL}}}
\newcommand{\SigMEL}{\widetilde{\module_{\mathsf{EL}}}}
\newcommand{\MRAs}{\module_{\mathsf{RAs}}}
\newcommand{\sigForModule}[1]{\widetilde{#1}}

\newcommand{\STs}[1]{\#ST_{#1}}


\newcommand{\sat}[1]{\textbf{Subs2}(#1)}
\newcommand{\sub}[2]{\mathsf{Subs}(#1, #2)}
\newcommand{\subuni}[1]{\mathsf{Subs}(#1)}

\newcommand{\TEL}{\mathcal{T}_{\mathsf{EL}}}
\newcommand{\TBoxOnly}{\mathcal{T}}

\newcommand{\TRA}{\mathcal{T}_{\mathsf{RAs}}}
\newcommand{\TBox}[1]{\mathcal{T}_{#1}}
\newcommand{\SigTBox}[1]{\widetilde{\mathcal{T}}_{#1}}

\newcommand{\ClassifiedConcept}{\Sigma_{\mathsf{Classified}}}
\newcommand{\Known}{\mathsf{K}}
\newcommand{\Assertion}{\mathsf{A}}
\newcommand{\Possible}{\mathsf{P}}
\newcommand{\PossiblePairC}[1]{\mathsf{P}|_{#1}}
\newcommand{\RelationC}[2]{\mathsf{#1}|_{#2}}
\newcommand{\ToTest}{\mathsf{ToTest}}
\newcommand{\buildPre}[2]{\mathsf{buildPreModel}(#1:#2)}
\newcommand{\Prune}{\mathsf{prune}(\Possible, \Ont, A, \Known)}

\newcommand{\PossibleSubsumers}{\mathsf{possibleSubsumers}(\ConOnt{\Ont}, \Ont, E, s, \Assertion, \Known)}
\newcommand{\KnownSubsumers}{\mathsf{knownSubsumers}(\ConOnt{\Ont}, s_{0}, \Assertion)}
\newcommand{\laModel}{\mathcal{L}_{\Assertion}(s)}
\newcommand{\reach}[3]{#1 \rightsquigarrow_{#2} #3}
\newcommand{\hermitHierarchy}[4]{\mathsf{hierarchy}(#1, #2, #3, #4) }

\newcommand{\haoruoChange}[1]{{\color{blue}#1}}

\newcommand{\CTRG}[1]{\mathsf{CTRG}(#1)}
\newcommand{\CTRGP}[1]{\mathsf{CTRGP}(#1)}
\newcommand{\CTK}[1]{\mathsf{CTK}(#1)}

\newcommand{\STRG}[1]{\mathsf{STRG}(#1)}
\newcommand{\STRGP}[1]{\mathsf{STRGP}(#1)}

\newcommand{\Ptime}{\textsc{PTime}}
\newcommand{\machineOne}{\mathsf{Mach}_1}
\newcommand{\machineTwo}{\mathsf{Mach}_2}
\newcommand{\PelletAvoider}{\mathsf{PelletAvoider}}
\newcommand{\HermiTAvoider}{\mathsf{HermiTAvoider}}

\newcommand{\CheckedC}{\mathsf{CheckedC}}
\newcommand{\PotentialC}{\mathsf{PoC}}
\newcommand{\PositiveBenifit}[2]{\mathsf{PositiveBenifit(#1, #2)}}
\newcommand{\NegativeBenifit}[2]{\mathsf{NegativeBenifit(#1, #2)}}
\newcommand{\TotalOrderH}{\mathcal{H_T}}
\newcommand{\ParTOrder}{\mathcal{S_{\mathsf{TotalO}}}}

\newcommand{\ADTopDown}{\mathsf{AD aware Top Down Phase}}
\newcommand{\ADBottomUp}{\mathsf{AD aware Bottom Up Phase}}

\newcommand{\Equiv}[1]{\mathsf{Equiv}(#1)}
\newcommand{\SuperC}[1]{\mathsf{SuperC}(#1)}
\newcommand{\notSuperC}[1]{\mathsf{notSuperC}(#1)}
\newcommand{\HighestNotSuperC}[1]{\mathsf{HighestNotSuperC}(#1)}

\newcommand{\HighestClassifiedC}[1]{\mathsf{HighestClassifiedC}(#1)}

\newcommand{\LowestCon}[1]{\mathsf{LowestCon}(#1)}
\newcommand{\HighestCon}[1]{\mathsf{HighestCon}(#1)}
\newcommand{\PositBenifit}[1]{\mathsf{PositFreeSTs}(#1)}
\newcommand{\NegatBenifit}[1]{\mathsf{FreeNonS}(#1)}
\newcommand{\PotenNegatBenifit}{\mathsf{PFNS}}

\newcommand{\PotentialSubsumers}[1]{\mathsf{PotSubs}(#1)}
\newcommand{\LowerPotentialSubsumers}[1]{\mathsf{LowerPotSubs}(#1)}

\newcommand{\ConceptNameWithTopBot}[1]{\mathsf{C}^+\! (#1)}

\newcommand{\ADFNST}[1]{\mathsf{ADFNST}(#1)}

\newcommand{\MarkedC}{\mathsf{ClassifiedC}}
\newcommand{\SubA}[1]{\mathsf{SubConDirect(#1)}}
\newcommand{\SupA}[1]{\mathsf{SupConDirect(#1)}}
\newcommand{\ADTotalOrder}{\textsf{ADOrder}}

\newcommand{\subuniI}[1]{\mathsf{SubsI}(#1)}

\newcommand{\temporP}{\textcolor{blue}{Reasoner:Pellet}}

\newcommand{\PBFBotAxiom}{\varphi^{\bot} (\alpha)}
\newcommand{\PBFBotaxiom}[1]{\varphi^{\bot} (#1)}
\newcommand{\PBFTopAxiom}{\varphi^{\top} (\alpha)}
\newcommand{\PBFTopaxiom}[1]{\varphi^{\top} (#1)}
\newcommand{\PBFDletaAxiom}{\varphi^{\delta} (\alpha)}

\newcommand{\PBFBotPolarityForm}[2]{\varphi^{\bot}_{#1} ({#2})}  

\newcommand{\PBFTopPolarityForm}[2]{\varphi^{\top}_{#1} ({#2})}  

\newcommand{\PBFDeltaPolarityForm}[2]{\varphi^{\delta}_{#1} ({#2})}  

\newcommand{\ConSBot}{\textbf{Con}^{\bot}(\Sigma)}
\newcommand{\ConSTop}{\textbf{Con}^{\top}(\Sigma)}

\newcommand{\PBFDletaAxiomLength}{\lvert\varphi^{\delta} (\alpha)\rvert}

\newcommand{\PBFDletaAxiomSig}{\varphi^{\delta}_{\Sigma}(\alpha)}

\newcommand{\SigPBFDleta}{\mathcal{S}^{\varphi}_{\alpha}}

\newcommand{\PBFSubClassBot}[2]{\PBFBotPolarityForm{+}{#1} \land \PBFBotPolarityForm{-}{#2}}
\newcommand{\PBFSubClassTop}[2]{\PBFTopPolarityForm{+}{#1} \land \PBFTopPolarityForm{-}{#2}}

\newcommand{\PBFDleta}[1]{\varphi^{\delta} (#1)}

\newcommand{\PBFDeltaWholeFormula}{ \bigvee\limits_{\alpha \in \ensuremath{\mathfrak{a}}} \varphi^{\delta} (\alpha)}
\newcommand{\PBFDeltaWholeFormulaEvaluation}{ \bigvee\limits_{\alpha \in \ensuremath{\mathfrak{a}}} \varphi^{\delta}_\Sigma (\alpha)}

\newcommand{\map}{\varphi^{\delta}(\cdot)}
\newcommand{\MSS}{\delta-\Sigma^{mss}_{\mathcal{M}}}
\newcommand{\MSSs}{\textbf{mssig}(\Module, \Ont)}
\newcommand{\MSSsAtom}{\textbf{mssig}(\PricinpalAtom, \Ont)}

\newcommand{\PBFsAtomSig}[2]{\varphi_{#2}^{\delta} (#1)}

\newcommand{\PBFAD}{(\AD, \succeq, \map)}

\newcommand{\ChainsawLabel}[1]{\mathsf{L}(#1)}

\newcommand{\UnclassifiedAxioms}{\mathsf{UnclassifiedAxioms}}
\newcommand{\UnclassifiedModuleAxioms}{\mathsf{UnclassifiedModuleAxioms}}


\newcommand{\cps}{\ensuremath{\mathsf{CPs}}\xspace}
\newcommand{\rss}{\ensuremath{\mathsf{RRS}}\xspace}
\newcommand{\rw}{\ensuremath{\mathsf{RW}}\xspace}
\newcommand{\ma}{\ensuremath{\mathsf{MA}}\xspace}
\newcommand{\rma}{\ensuremath{\mathsf{RMA}}\xspace}
\newcommand{\width}{\ensuremath{\mathsf{Width}}\xspace}
\newcommand{\Cor}{\ensuremath{\mathsf{Cor}}\xspace}

\newcommand{\soas}{\mathfrak{A}(\Ont)}
\newcommand{\xsoas}{\mathfrak{A}^x(\Ont)}
\newcommand{\Omc}{\ensuremath{\mathcal{O}}\xspace}
\newcommand{\tbstar}{\top\!\bot\!^*}

\newcommand{\ClasiTime}[1]{\mathsf{CTH}(#1) }
\newcommand{\ClasiR}[2]{\mathsf{CT_{#2}}(#1)}

\newcommand{\CompTime}[1]{\mathsf{T}(#1) }

\newcommand{\DupliPer}{$\mathsf{DupliPer}$\xspace}
\newcommand{\ModPer}{$\mathsf{EL\text{-}ModPer}$\xspace}

\newcommand{\snomed}[1]{\mathsf{SNOMED}_{#1}}
\newcommand{\ana}{\mathsf{Anatomy}}
\newcommand{\gene}{\mathsf{GO}}
\newcommand{\hpo}{\mathsf{HPO}}

\newcommand{\structureConcept}{\mathsf{StructureConcept}}
\newcommand{\entireConcept}{\mathsf{EntireConcept}}

\newcommand{\modified}{}
\newcommand{\modifiedR}{}
\newcommand{\TODO}{{\color{blue}TODO}}

\newcommand{\PRE}{\mathsf{Preprocessing}}

\newcommand{\snomedCT}{\mathsf{SNOMED \: CT}}
\newcommand{\go}{\mathsf{GO}}
\newcommand{\Galen}{\mathsf{Galen}}
\newcommand{\fma}{\mathsf{FMA}}

\newcommand{\cnl}{\mathsf{CNL}}

\newcommand{\targetOnt}{\Ont_{\mathsf{Target}}}

\newcommand{\amazonNovaLit}{\textsf{amazon-nova-lite-v1}}
\newcommand{\amazonNovaMicro}{\textsf{amazon-nova-micro-v1}}
\newcommand{\deepseekThirtyOne}{\textsf{deepseek-chat-v3.1}}
\newcommand{\deepseekThirtyTwo}{\textsf{deepseek-v3.2-exp}}
\newcommand{\geminiTwoFiveLite}{\textsf{google-gemini-2.5-flash-lite}}
\newcommand{\geminiTwoFive}{\textsf{google-gemini-2.5-flash}}
\newcommand{\openAIFourZero}{\textsf{openai-gpt-4o-mini}}
\newcommand{\openAINanoFive}{ \textsf{openai-gpt-5-nano} }
\newcommand{\openAIFive}{\textsf{openai-gpt-5}}
\newcommand{\openAIoss}{\textsf{openai-gpt-oss-120b}}
\newcommand{\qwenPlus}{\textsf{qwen-plus}}
\newcommand{\qwenTurbo}{\textsf{qwen-turbo}}
\newcommand{\zAI}{\textsf{z-ai-glm-4.5-air}}

\newcommand{\amazon}{\mathsf{Amazon Nova}}
\newcommand{\DS}{\mathsf{Deepseek}}
\newcommand{\OAI}{\mathsf{OpenAI}}
\newcommand{\QWEN}{\mathsf{QWEN}}
\newcommand{\Gemini}{\mathsf{Google Gemini}}
\newcommand{\glm}{\mathsf{GLM}}

\newcommand{\goodColor}{\textcolor{red}}
\newcommand{\badColor}{\textcolor{blue}}

%% file: Introduction.tex
\section{Introduction}
\label{section:introduction}
The Web Ontology Language (OWL)~\cite{bechhofer2004owl,CHM+08} is a The World Wide Web Consortium (W3C) standard for representing ontologies with well-defined semantics. Over the past decades, OWL ontologies have had substantial impact across scientific and industrial settings. Biomedical ontologies such as $\snomedCT$~\cite{schulz2009snomed}, the Gene Ontology ($\go$)~\cite{gene2019gene}, the Human Phenotype Ontology ($\hpo$)~\cite{talapova2023human}, and the Foundational Model of Anatomy ($\fma$) ontology~\cite{rosse2003reference} demonstrate how long-term iterative development can yield highly mature resources that play a critical role in numerous large-scale applications. Encouraged by these developments, an increasing number of emerging domains have begun to develop their own ontologies, often within large collaborative initiatives. Notable examples include the Fluently ontology for Human–Robot Collaboration (HRC)~\cite{hall2024towards}, the Smart Applications REFerence (SAREF) ontology in the energy sector~\footnote{\url{https://saref.etsi.org/core/v4.1.1/}}, and the IES4 ontology~\footnote{\url{https://github.com/dstl/IES4}}
 developed within UK transport programmes. 

OWL is based on Description Logics (DLs)~\cite{DLIntro17}, which provide a well-defined semantics together with sound and complete reasoning. The formal semantics of DLs are non-trivial, and many domains lack established ontology engineering practices. In settings involving extensive modelling tasks, it is difficult to assume that domain experts fully interpret the semantics of OWL axioms in all cases. Ontology engineers, however, may lack detailed domain knowledge. The effectiveness of logical reasoning depends on the quality of the ontology. Interaction between ontology engineers and domain experts is therefore an important component of ontology modelling. Active learning~\cite{angluin1992computational} describes the interaction between teachers and learners through queries. In the context of ontology learning, this interaction can be formalised using Angluin’s exact learning framework~\cite{angluin1988queries} to model the roles of ontology engineers and domain experts~\cite{konev2018exact}. Using domain experts as teachers may be costly, since exact learning typically requires many queries. Large Language Models (LLMs) have therefore been considered as alternative teachers~\cite{magnini2025actively}, providing approximate responses based on their training data. This substitution, however, departs from the assumption in~\cite{konev2018exact} that the teacher answers queries truthfully and may introduce errors due to LLM hallucinations~\cite{ji2023survey}.

In this work, we propose LLM-assisted active learning approach which introduces LLMs as a third component within Angluin’s exact learning framework. The interaction between ontology engineers and domain experts remains unchanged. LLMs serve only to provide auxiliary guidance to ontology engineers, aimed at improving interaction efficiency. For each candidate axiom proposed by the ontology engineer, inspired by the reduction from subsumption testing to satisfiability, we construct the corresponding counter-concept and verbalise it using controlled natural language (CNL). The resulting description is used to formulate a prompt for the LLM, thereby reducing the reasoning task to a purely linguistic form. Using a confusion matrix, we show that under the Open World Assumption (OWA), false negatives and false positives have asymmetric impact. We formally prove that our method produces only Type II errors, which in the worst case delay ontology construction rather than alter the logical consequences of the ontology by introducing incorrect axioms. In our empirical study, recall is used to measure the frequency of Type II errors. Results across 13 commercial LLMs indicate that recall remains high even under conservative assumptions, suggesting that the approach can be applied in practical ontology modelling settings.

%% file: BackgroundKnowledge.tex
\section{Related Work}
\label{section:BK}

\subsection{OWL Ontology and Description Logics}

We assume that readers are familiar with DLs. DLs are a family of knowledge representation languages and are decidable fragments of first-order logic~\cite{DLIntro17}. DLs form the logical foundation of OWL. In DLs, concept names (e.g., Person, Pet) and role names (e.g., loves, has) are used to represent unary and binary predicates, respectively. A datatype theory $\mathcal{D}$ consists of a mapping from datatypes to sets of values, along with a function that maps each data value to its denotation within the corresponding value set. In this paper, we use $\Ont$ to denote a $\shiqD$~\cite{DLIntro17} ontology, and $\Sigma$ to denote a signature, i.e., a set of concept names $\ConceptName$, role names $\RoleName$, and datatype (or concrete) role names. We consider only $\shiqD$ ontologies consisting of Terminological Part (TBox). A role in $\shiqD$ is either a role name $r$ or its inverse $r^{-}$. The TBox of a DL ontology is a finite set of axioms consisting of General Concept Inclusions (GCIs) of the form $C \sqsubseteq D$, where $C,D$ are (possible complex) concepts. 

DLs have well-defined semantics. An interpretation $\mathcal{I}$ is a pair $\mathcal{I} = ( \Delta^{\mathcal{I}}, \cdot^{\mathcal{I}} ) $ with a non-empty set (domain of the interpretation) $\Delta^{\mathcal{I}}$ and the interpretation function $\cdot^{\mathcal{I}}$ which maps every concept name $A \in \ConceptName$ to a set $A^{\mathcal{I}} \subseteq \Delta^{\mathcal{I}}$, and every role name $r \in \RoleName$ to a binary relation $r^{\mathcal{I}} \subseteq \Delta^{\mathcal{I}} \times \Delta^{\mathcal{I}}$. The interpretation $C^{\mathcal{I}}$ of $C$ is defined inductively in~\cite{DLIntro17}. An interpretation $\mathcal{I}$ satisfies $C$ if $C^{\mathcal{I}}$ is not empty.
It satisfies the GCI $C \sqsubseteq D$ if $C^{\mathcal{I}} \subseteq D^{\mathcal{I}}$. If an interpretation $\mathcal{I}$  satisfies all GCIs in $\Ont$, then $\mathcal{I}$ is a model of $\Ont$. In this paper, checking whether $C$ is subsumed by $D$ with respect to the ontology $\Ont$, written as $\Ont \models^{?} C \sqsubseteq D$, is called a Subsumption Test (ST). ST is typically solved by reducing it to concept satisfiability, which is handled by tableau-based algorithms~\cite{HoKS06,Kaz08} implemented in modern DL reasoners~\cite{glimm2014hermit,sirin2007pellet,steigmiller2014konclude,zhao2021read}. That is, we construct a model $\mathcal{I}$ of $\Ont$, and check whether $\mathcal{I}$ satisfies $C \sqcap \neg D$. If no such model exists, then $\Ont \models C \sqsubseteq D$; otherwise, $\Ont \nvDash C \sqsubseteq D$. We refer to $C \sqcap \neg D$ as the counter-concept of the GCI $C \sqsubseteq D$.


\subsection{OWL Ontology Modelling Methods}
Several ontology modelling approaches focus on the interaction between ontology engineers and domain experts. CNLs~\cite{liang2011automatic,hart2008rabbit,fuchs2005attempto,cregan2007sydney} verbalise OWL axioms in natural language, reducing the need for domain experts to interpret formal DL semantics. Competency Questions (CQs) are used to elicit requirements and guide axiom formulation~\cite{ren2014towards}. Recent work has explored the derivation of CQs from functional requirements and their refinement using LLMs~\cite{alharbi25,alharbi2024investigating}. In contrast, our work does not rely on manually designed CQs. Instead, LLMs are used to generate counterexamples via CNL-based prompts inspired by the standard reduction from subsumption testing to satisfiability.




\subsection{Ontology Learning}
Ontology learning supports ontology modelling and can be characterised along several dimensions, including the data sources used, the type of knowledge produced, the learning methods applied (e.g. Formal Concept Analysis (FCA)~\cite{ganter1999formal,baader2006completing} and Association Rule Mining (ARM)~\cite{volker2011statistical}), the underlying semantics, the exploitation of available background knowledge, and the degree of domain expert involvement, which varies substantially across different approaches~\cite{sazonau2015general}.

\subsubsection{Induction} Inductive Logic Programming (ILP)~\cite{muggleton1994inductive} is the main supervised learning paradigm used for Class Description Learning (CDL)~\cite{baader2007computing,fanizzi2008dl,lehmann2010concept}, which aims to induce class expressions for a given class name from sets of positive and negative examples~\cite{lehmann2010concept}. Inspired by ILP, an unsupervised approach called General Terminology Induction (GTI)~\cite{sazonau2015general,sazonau2017mining} has been proposed for OWL ontologies under the OWA, aiming to learn sets of GCIs (hypotheses). Our work is highly inspired by GTI in that we fully consider unknown cases under the OWA and explicitly exploit the semantics of ontologies, rather than relying on predefined positive or negative examples. However, in contrast to GTI, we do not operate over entailed instance-level examples. We use LLMs as semantic assistants to support ontology modelling.

\subsubsection{Active Learning} Angluin’s exact learning framework~\cite{angluin1988queries} has recently been used to formalize the active learning~\cite{angluin1992computational} of lightweight OWL ontologies~\cite{konev2018exact,funk2019learning,funk2021actively}. In this framework, sets of GCIs (hypotheses) are learned through interactions between a learner and a teacher via two types of queries, namely MQs and equivalence queries (EQs). In a MQ, the learner proposes a candidate GCI and asks the teacher whether it is acceptable (e.g., entailed, consistent). The teacher responds with binary feedback. EQs are more challenging: the learner proposes a hypothesis consisting of a set of GCIs and asks the teacher whether it captures the target ontology. If the answer is negative, the teacher must explain why the hypothesis is incorrect, for example by indicating a GCI that is missing with respect to the target ontology.~\cite{ozaki2025actively}. To learn knowledge related to fruits, the learner may pose a MQ such as ``Is every apple a fruit?”, corresponding to the axiom $Apple \sqsubseteq Fruit$. The teacher then responds with a binary answer (Yes or No). An EQ, in contrast, corresponds to the learner claiming that the current set of axioms is sufficient to represent the domain knowledge about fruits. If this claim is rejected, the teacher provides evidence, such as an axiom or instance indicating that the current set of axioms is incomplete or incorrect. In our work, we focus on MQs and retain the interaction between the learner and the teacher. To reduce the burden on the teacher and the difficulty of interpreting ontology semantics, we introduce an LLM as a third component in the interaction.

\subsubsection{LLM-aware Ontology Learning}
In recent years, there has been increasing interest in exploring the interaction between LLMs and ontology reasoning tasks~\cite{he2023language,yang2026large}. Ontology learning methods have also made use of LLMs, for example in CDL~\cite{barua2025description,duranti2024llm} and in active learning~\cite{magnini2025actively}, where LLMs are used as teachers to answer MQs and EQs for learning lightweight OWL ontologies. In contrast, our work follows the same setting as in~\cite{konev2018exact}, where domain experts act as teachers, and introduces LLMs as an additional third component in the interaction explained in Section~\ref{section:LLMassistedAL}. Both our work and~\cite{magnini2025actively} use CNL to render MQs for LLMs. However, while~\cite{magnini2025actively} directly renders MQs, our approach instead provides the CNL representation of the corresponding counter-concept. Specifically, given the candidate axiom $Apple \sqsubseteq Fruit$, the CNL-based prompt in~\cite{magnini2025actively} is formulated as ``Can Apple be considered a subcategory of Fruit?”. Our method formulates the query as ``Find a real-world example of an individual that satisfies the following description: an individual that is an Apple and not a Fruit”. Furthermore, we adopt a confusion matrix~\cite{stehman1997selecting} to analyse the potential impact of LLM hallucinations on MQs.

%% file: Method.tex
\section{LLM-assisted Active Learning
}
\label{section:LLMassistedAL}
We now describe how our method relates to Angluin’s Exact Learning Framework~\cite{angluin1988queries}. Following the setting of~\cite{konev2018exact}, we use $\targetOnt$ to denote the target ontology; domain experts are treated as teachers, and ontology engineers as learners. Our approach also relies on the following assumptions concerning ontology engineers and domain experts:
\begin{itemize}
    \item The domain expert has knowledge of the domain but is unable to formalise the target ontology $\targetOnt$.
    \item The signature $\Sigma(\targetOnt)$ of the target ontology is shared with the ontology engineer, who otherwise has no knowledge of the domain.
\end{itemize}
In our work, we focus exclusively on MQs, which are used to check whether a candidate axiom belongs to the target ontology. EQs, such as testing whether the constructed ontology is equivalent to the target ontology, are left for future work. The assumptions related to MQs in~\cite{konev2018exact}, are described as follows:
\begin{itemize}
    \item The ontology engineer can pose queries of the form $\targetOnt \models^{?} C \sqsubseteq D$ to the domain expert, who answers truthfully.
\end{itemize}
Under our assumptions, ontology engineers lack domain knowledge and may therefore pose many MQs to domain experts, leading to an increased burden. In addition, the semantics of DLs are non-trivial, which makes it difficult to guarantee a correct interpretation of queries by domain experts. In our work, we retain these assumptions, but LLMs are used to complement background knowledge with ontology-aware examples, thereby reducing the burden on domain experts, as shown in Figure~\ref{fig:LLMAL}. LLMs are treated as a third component in the setting rather than as teachers, and the overall process remains human-centred. Final verification is performed exclusively by domain experts.

\begin{figure}[t]
\centering
\includegraphics[width=\columnwidth]{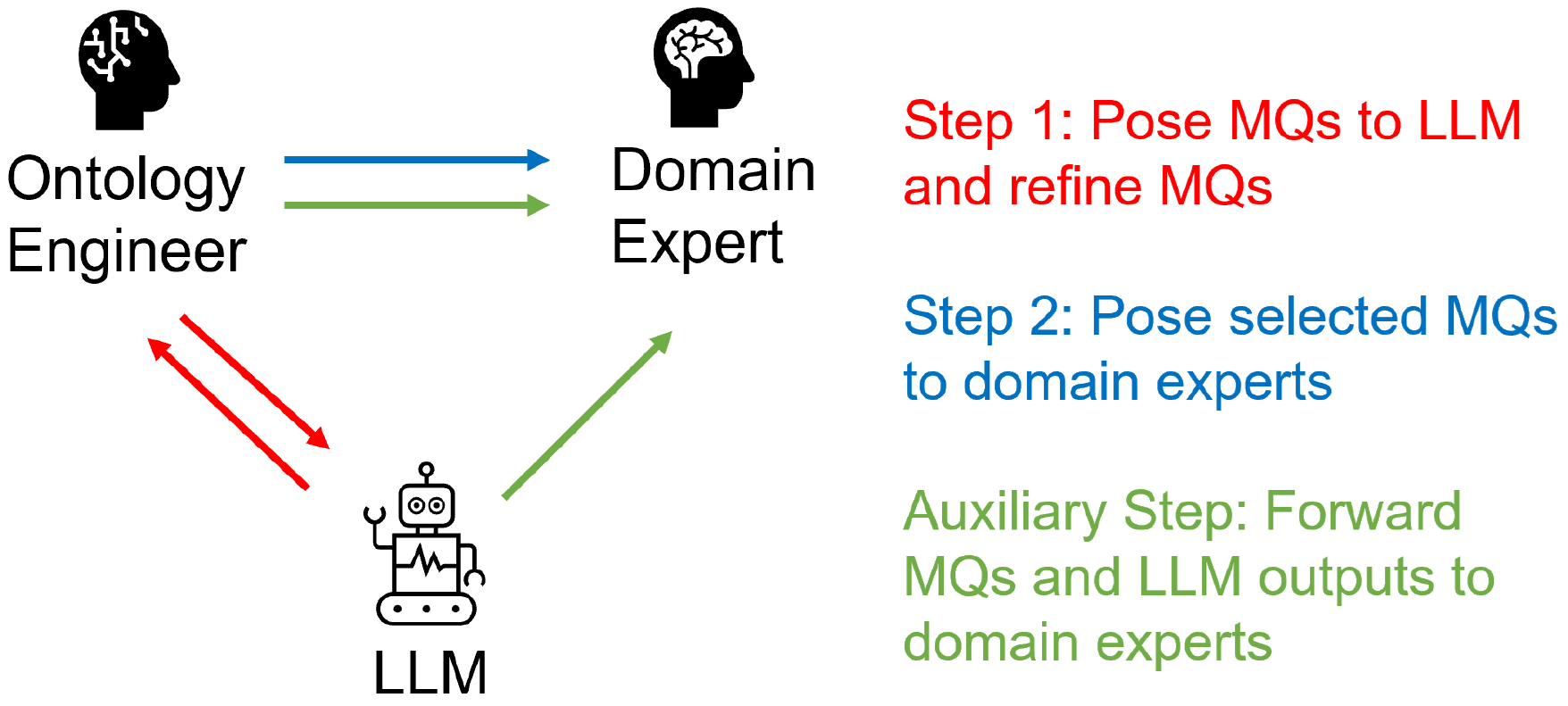}
\caption{\label{fig:LLMAL} An LLM-assisted Active Learning Setting}
\vspace{-1.5em}
\end{figure}


When $\targetOnt$ in MQs is replaced by a given ontology $\Ont$, a query of the form $\Ont \models^{?} C \sqsubseteq D $ is reduced to concept satisfiability by attempting to construct a model of $\Ont$ satisfying the counter-concept $C \sqcap \neg D$. If such a model exists, then $\Ont \nvDash C \sqsubseteq D$. Inspired by this reduction, our approach, referred to as the LLM Fuzzing Method, employs LLMs to describe approximate elements of a model of the target ontology $\targetOnt$ in natural language, drawing on existing real-world cases as discussed in Section~\ref{section:SAT}.

Based on this method, we construct an LLM-assisted active learning process for MQs. Rather than directly posing MQs to domain experts, ontology engineers first obtain natural language descriptions of real-world cases related to the counter-concept from LLMs. If these descriptions are sufficient to convince the ontology engineers, the corresponding MQs are not forwarded to domain experts, thereby increasing the involvement of ontology engineers in the process. If no real-world case can be identified for the counter-concept, the selected MQs are then posed to domain experts. In addition, as an auxiliary step, when ontology engineers remain uncertain about the LLM outputs, they may consult domain experts regarding the MQs together with the corresponding LLM-provided advice.

\section{LLM Fuzzing Method}
\label{section:SAT}
Concept satisfiability has high worst-case computational complexity (2NExpTime-complete for OWL 2~\cite{Kaz08}). Moreover, constructing an explicit model for a target ontology is itself highly challenging, particularly when it depends on collecting real-world data and when the target ontology $\targetOnt$ is difficult to describe. In our work, LLMs are used to provide approximate descriptions related to a model of the counter-concept.
We therefore propose an LLM Fuzzing Method, a symbolic–linguistic pipeline for ontology modelling, as shown in Figure~\ref{fig:method}. It is important to note that this method does not construct an exact model of the counter-concept, since LLMs are neither sound nor complete reasoners. Rather than asking LLMs to directly construct real-world models of $C \sqcap \neg D$, the counter-concept is translated into CNL prompts. This design preserves sound results prior to any interaction with the LLM, thereby reducing uncertainty in the overall process. Rather than relying on the LLM to simultaneously interpret OWL semantics and generate real-world cases, our approach decouples semantic reasoning from language generation. Specifically, we neither assume that domain experts can easily understand the formal semantics of OWL ontologies, nor expect LLMs to reason over them directly. As a result, the LLM is presented with a purely linguistic task, while soundness is ensured entirely within the symbolic layer, see Figure~\ref{fig:method}.
\begin{figure*}[tbp]
\centering
\includegraphics[width=\textwidth]{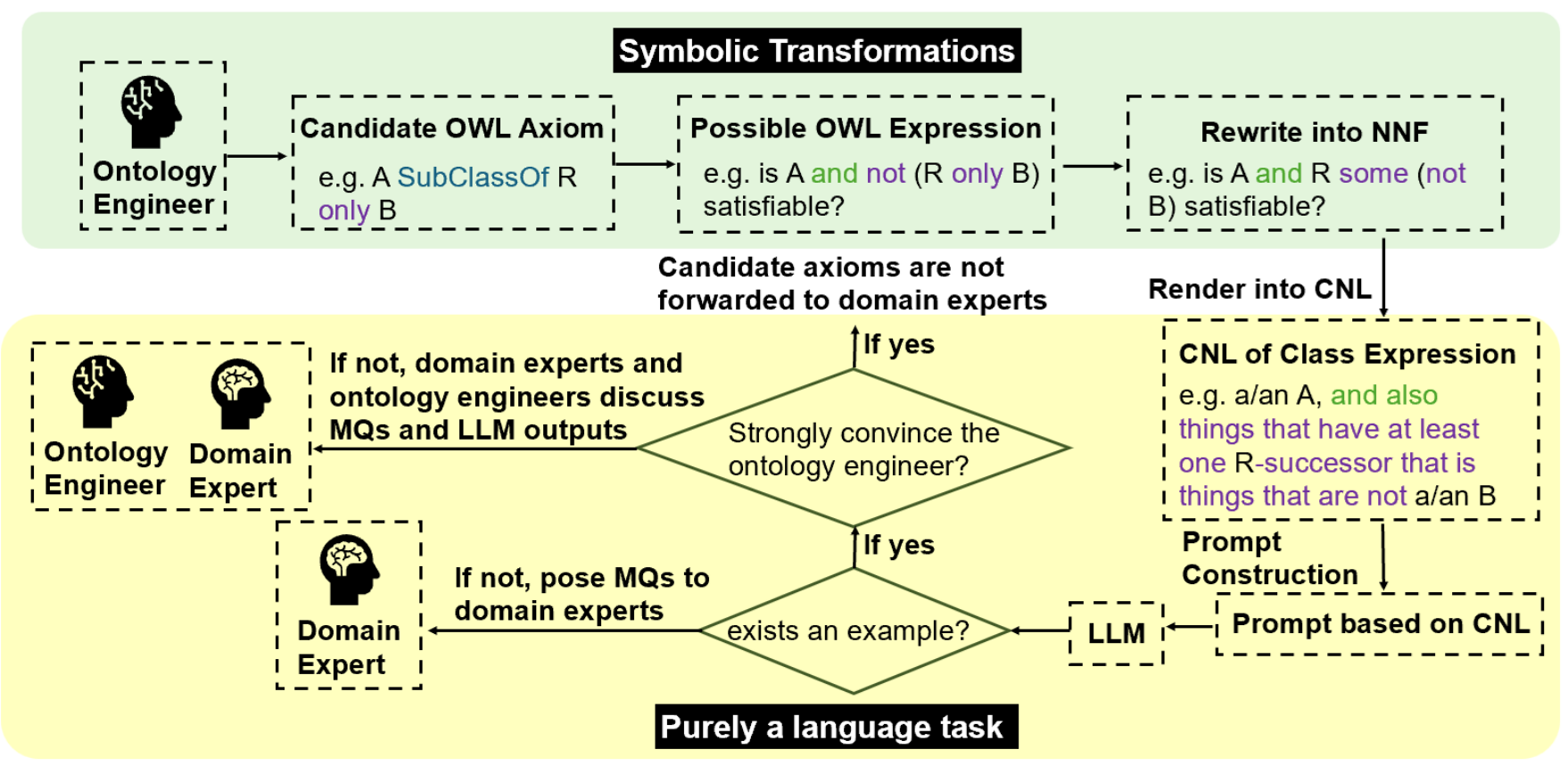}
\caption{\label{fig:method} LLM Fuzzing Method }
\vspace{-1.5em}
\end{figure*}

Since our method focuses on integrating LLMs into the active learning framework, it does not impose additional restrictions on the construction of candidate OWL axioms. The method can be treated as a plug-in component for active learning with MQs in OWL ontologies, as it preserves the interaction between ontology engineers and domain experts and follows all assumptions stated in~\cite{konev2018exact}. Candidate axioms can be manually constructed by ontology engineers, or derived from predefined positive and negative examples using GTI methods.

\subsection{Encoding $\shiqD$ MQs as CNL Prompts}
\begin{tcolorbox}[
breakable,
  colback=white,
  colframe=blue!75!black,
  coltitle=white,
  colbacktitle=blue!75!black,
  title=SAT-inspired Prompt,
  fonttitle=\bfseries,
  boxrule=0.8pt,
  left=6pt,
  right=6pt,
  top=6pt,
  bottom=6pt
]

\small
Find a real-world example of an individual that satisfies the following description:

\begin{quote}
\emph{An individual that is \(\cnl_{\mathcal{V}(F)}\).}
\end{quote}

At the very beginning of your answer, output only one word:
\begin{itemize}
  \item \textbf{Yes} — if a clear and natural real-world example exists.
  \item \textbf{No} — if no such real-world example exists, or if the description is contradictory or unrealistic.
\end{itemize}

After that, provide the analysis:
\begin{enumerate}
  \item If \textbf{Yes}:
    \begin{enumerate}
      \item Give the actual name or model of the individual.
      \item Provide a plain English explanation, grounded in real-world facts, explaining why the description is satisfied.
      \item If possible, include references or factual evidence.
    \end{enumerate}
  \item If \textbf{No}:
    \begin{itemize}
      \item Provide an analysis explaining why no real-world example exists, or why the description is contradictory or unrealistic.
    \end{itemize}
\end{enumerate}

\end{tcolorbox}

Given a candidate $\shiqD$ GCI of the form $C \sqsubseteq D$, posed as a MQ, we rewrite it into the corresponding counter-concept $C \sqcap \neg D$. To enable linguistic interaction while preserving symbolic semantics, we define a CNL for OWL class expressions with respect to $\shiqD$, inspired by existing work on CNLs~\cite{liang2011automatic}.

Following~\cite{horrocks2003reducing}, we use $\mathcal{V}(F)$ to denote the translation of an OWL fragment $F$ into the corresponding DL concept. OWL class expressions are normalized into negation normal form (NNF), as commonly adopted in Semantic Web toolchains, such as the OWL API~\cite{horridge2011owl}. The translations to $\shiqD$ and to CNL are summarized in Table~\ref{tab:owl-to-shiqd}, and the resulting CNL verbalization is denoted by $\cnl_{\mathcal{V}(F)}$. Using $\cnl_{\mathcal{V}(F)}$, we formulate a SAT-inspired prompt that transforms the original query into a purely linguistic task.

\begin{table*}[t]
\centering
\caption{Translation from OWL class and property expressions to $\shiqD$ and CNL, where $R$ may be an object property or its inverse. We use $\mathsf{annotation}(A)$ to denote the annotation of an OWL class. 
For readability, we abbreviate $\mathsf{restriction}$ as $\Res$ in this table.}
\small
\begin{tabular}{|l|l|l|}

\hline
\textbf{OWL fragment $F$} & \textbf{Translation $\mathcal{V}(F)$ } & $\cnl_{\mathcal{V}(F)}$ \\
\hline
$A$, OWL class name & $A$ & a/an $A$, (also known as $\mathsf{annotation}(A)$) \\
$B$, OWL datatype name & $B$ & datatype $B$\\
$R$, OWL object property name & $R$ & $R$\\
$T$, OWL datatype property name & $T$ & $T$ \\
$v$, OWL data value & $v$ & data value $v$ \\
\hline
$\mathsf{inverseOf}(R) $& $\mathcal{V}(R)^{-}$ & inverse of $\mathcal{V}(R)$\\ 
\hline
$\mathsf{intersectionOf}(C_1,\ldots,C_n)$ 
  & $\mathcal{V}(C_1) \cdots \sqcap \mathcal{V}(C_n)$ & $\mathcal{V}(C_1) \cdots$ , and also $\mathcal{V}(C_n)$ \\
$\mathsf{unionOf}(C_1,\ldots,C_n)$ 
  & $\mathcal{V}(C_1) \cdots \sqcup \mathcal{V}(C_n)$ & $\mathcal{V}(C_1) \cdots$, or also $\mathcal{V}(C_n)$\\
$\mathsf{complementOf}(C)$ 
  & $\neg \mathcal{V}(C)$ & things that are not $\mathcal{V}(C)$ \\
\hline
$\mathsf{\Res}(R\ r_1\ \ldots\ r_n)$
  & $\mathcal{V}(\mathsf{\Res}(R\ r_1))  \cdots \sqcap 
     \mathcal{V}(\mathsf{\Res}(R\ r_n))$ & $\mathcal{V}(\mathsf{\Res}(R\ r_1)) \cdots$ , and also
     $\mathcal{V}(\mathsf{\Res}(R\ r_n))$ \\
$\mathsf{\Res}(R\ \mathsf{allValuesFrom}(C))$
  & $\forall \mathcal{V}(R).\mathcal{V}(C)$ & things that have only $\mathcal{V}(R)$ -successors that are $\mathcal{V}(C)$\\
$\mathsf{\Res}(R\ \mathsf{someValuesFrom}(C))$
  & $\exists \mathcal{V}(R).\mathcal{V}(C)$ & things that have at least one $\mathcal{V}(R)$ -successor that is $\mathcal{V}(C)$\\
$\mathsf{\Res}(R\ \mathsf{minCardinality}(n \,C))$
  & $\geq n\,\mathcal{V}(R).\mathcal{V}(C)$ & things that have at least $ n \mathcal{V}(R)$ -successor that is $\mathcal{V}(C)$\\
$\mathsf{\Res}(R\ \mathsf{maxCardinality}( n \,C))$
  & $\leq n\,\mathcal{V}(R).\mathcal{V}(C)$ & things that have at most $ n \mathcal{V}(R)$ -successors that are $\mathcal{V}(C)$ \\
$\mathsf{\Res}(R\ \mathsf{cardinality}(n))$
  & $\geq n\,\mathcal{V}(R).\mathcal{V}(C) \sqcap \leq n\,\mathcal{V}(R).\mathcal{V}(C)$ &things that have exactly $ n \mathcal{V}(R)$ -successors that are $\mathcal{V}(C)$  \\
\hline
$\mathsf{\Res}(T\ r_1\ \ldots\ r_n)$
  & $\mathcal{V}(\mathsf{\Res}(T\ r_1))  \cdots \sqcap 
     \mathcal{V}(\mathsf{\Res}(T\ r_n))$ &$\mathcal{V}(\mathsf{\Res}(T\ r_1))  \cdots $, and also 
     $\mathcal{V}(\mathsf{\Res}(T\ r_n))$\\
$\mathsf{\Res}(T\ \mathsf{allValuesFrom}(D))$
  & $\forall \mathcal{V}(T).\mathcal{V}(D)$ & things that have only $\mathcal{V}(T)$ value of type $\mathcal{V}(D)$\\
$\mathsf{\Res}(T\ \mathsf{someValuesFrom}(D))$
  & $\exists \mathcal{V}(T).\mathcal{V}(D)$ & things that have at least one $\mathcal{V}(T)$ value of type $\mathcal{V}(D)$ \\
$\mathsf{\Res}(T\ \mathsf{minCardinality}(n\,D))$
  & $\geq n\,\mathcal{V}(T).\mathcal{V}(D)$ & things that have at least $n \mathcal{V}(T)$ value of type $\mathcal{V}(D)$\\
$\mathsf{\Res}(T\ \mathsf{maxCardinality}(n\,D))$
  & $\leq n\,\mathcal{V}(T).\mathcal{V}(D)$ & things that have at most $ n \mathcal{V}(T)$ value of type $\mathcal{V}(D)$\\
$\mathsf{\Res}(T\ \mathsf{cardinality}(n))$
  & $\geq n\,\mathcal{V}(T).\mathcal{V}(D) \sqcap \leq n\,\mathcal{V}(T).\mathcal{V}(D)$ &things that have exactly $ n \mathcal{V}(T)$ value of type $\mathcal{V}(D)$\\
  $\mathsf{\Res}(T\ \mathsf{value}(v))$ & $\exists \mathcal{V}(T).{\mathcal{V}(v)}$ & things that have $\mathcal{V}(D)$ equal to $\mathcal{V}(v)$ \\ 
  
\hline
\end{tabular}
\label{tab:owl-to-shiqd}
\end{table*}


Since $\cnl_{\mathcal{V}(F)}$ is a CNL, neither the prompt nor the expected response contains OWL or DL symbols.

\subsection{Asymmetric Impact of Type I and Type II Errors under the Open World Assumption}
The SAT-inspired prompt is used as input to the LLM. When LLMs are employed for direct ontology construction or treated as teachers in active learning, LLM hallucinations may introduce errors due to the lack of soundness, completeness, and explainability. In our method, the workflow is structured to analyse the effects of LLM hallucinations and to confine their impact to a controlled setting.

We analyse the possible outcomes of our approach using a confusion matrix~\cite{stehman1997selecting}. As ground-truth conditions, we distinguish whether the target ontology entails the candidate axiom, i.e., $\targetOnt \models C \sqsubseteq D$ (positive condition), or does not entail it, i.e., $\targetOnt \nvDash C \sqsubseteq D$ (negative condition). As predicted conditions, a \emph{Yes} response to the corresponding MQ is treated as a predicted positive, while a \emph{No} response is treated as a predicted negative. In our pipeline, a candidate axiom is accepted only if it is forwarded to and approved by the expert; it is rejected if the LLM strongly convinces the ontology engineer and the axiom is not forwarded for expert verification. Under this setting, we formally define Type I and Type II errors for MQs in active learning, together with the notion of Strong Conviction.
\begin{Definition}
A Type II error in active learning is defined as a predicted negative outcome (\emph{No}) for a MQ while the candidate axiom is entailed by the target ontology, i.e., $\targetOnt \models C \sqsubseteq D$.  \end{Definition}
A Type II error in active learning is equivalent to a false negative prediction.
\begin{Definition}
A Type I error in active learning is defined as a predicted positive outcome (\emph{Yes}) for a MQ while the candidate axiom is not entailed by the target ontology, i.e., $\targetOnt \nvDash C \sqsubseteq D$.
\end{Definition}

Type I and Type II errors are not equally severe during modelling the ontology due to the Open World Assumption (OWA). Under the OWA, non-entailment does not imply entailment of negation, and the \emph{unknown} case is an intended outcome of reasoning. As a consequence, the two types of errors have asymmetric effects. A Type I error, where an axiom that should not hold is incorrectly accepted, may introduce inconsistencies or conflicting reasoning results. In contrast, a Type II error, where an axiom that should hold is rejected, results only in an incomplete ontology. Under the OWA, such missing axioms do not introduce contradictions but merely reduce the completeness of the ontology or delay the ontology modelling. 

\begin{figure}[t]
\centering
\includegraphics[width=\columnwidth]{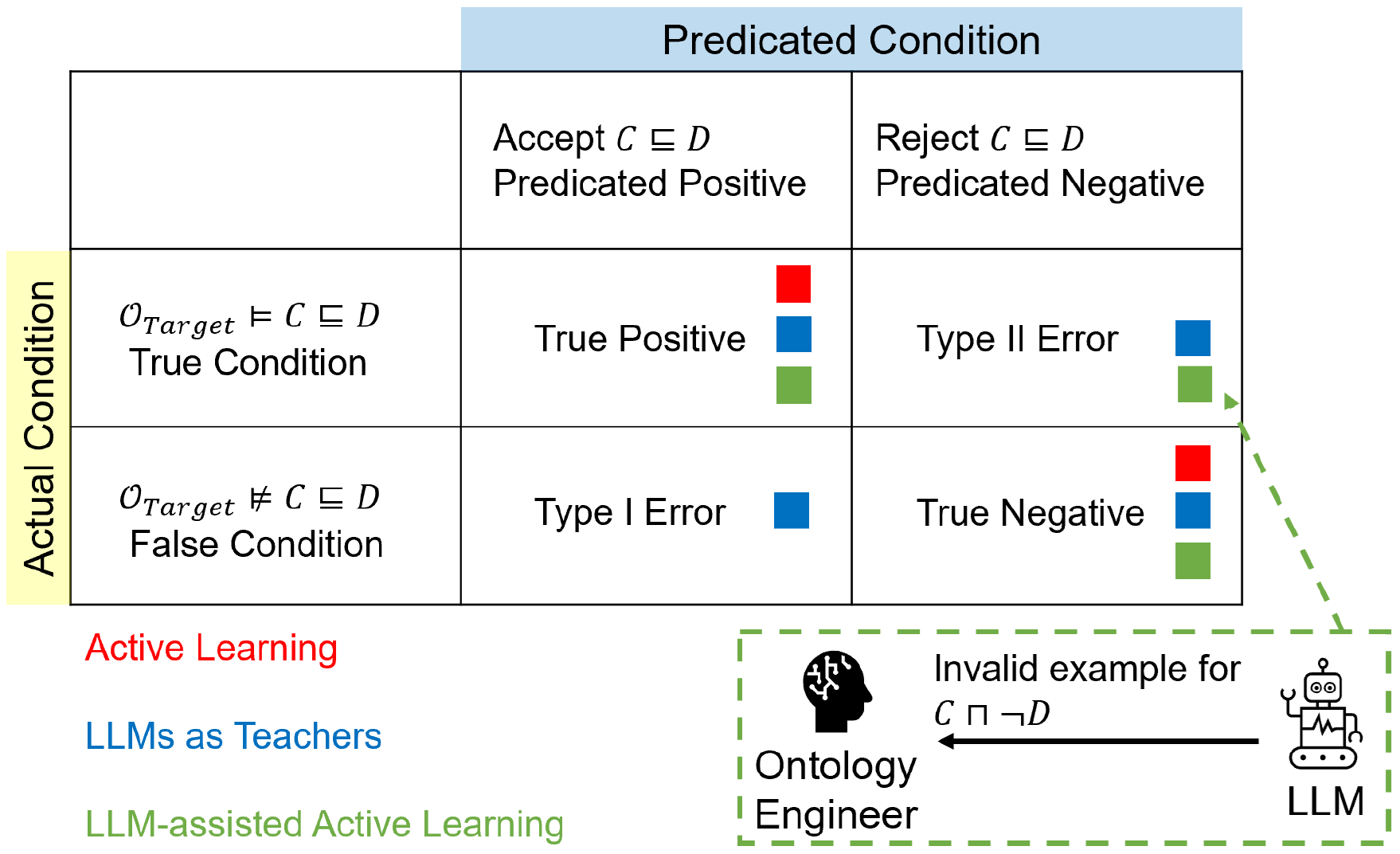}
\caption{\label{fig:confusionMatrix} Confusion Matrix for Active Learning }
\vspace{-1.5em}
\end{figure}

Figure~\ref{fig:confusionMatrix} illustrates the confusion matrix for active learning. In the setting described in~\cite{konev2018exact}, active learning does not incur Type I or Type II errors, since it is assumed that domain experts answer all MQs correctly. When LLMs are treated as teachers~\cite{magnini2025actively}, active learning may exhibit both Type I and Type II errors due to LLM hallucinations. In our approach, LLMs are treated as a third component rather than as teachers, and only Type II errors may be triggered. To establish this result, we first formalise the procedure of our method.
\begin{Definition}[Strong Conviction]
\label{definition:convice}
Given a positive LLM answer for the counter‑concept $C \sqcap \neg D$ accompanied by an explanation, 
we say that the LLM is \emph{strongly convincing} if the ontology engineer accepts that there exists a model $\mathcal{I}$ of $\targetOnt$ satisfying $(C \sqcap \neg D)$ and does not forward $C \sqsubseteq D$ to the expert.
\end{Definition}
\begin{corollary}
\label{coro:negativeConvinces}
If the LLM strongly convinces the ontology engineer,
then the predicted outcome of the MQ is negative.
\end{corollary}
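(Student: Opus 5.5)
The corollary follows by unfolding definitions. The plan is to connect Definition~\ref{definition:convice} (Strong Conviction) to the acceptance/rejection rule of the pipeline, and then to the way predicted conditions are defined for the confusion matrix. No real logical machinery is needed beyond the standard reduction of subsumption to unsatisfiability of the counter-concept recalled in Section~\ref{section:BK}.

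\emph{Step 1.} Assume the LLM strongly convinces the ontology engineer on $C \sqcap \neg D$. By Definition~\ref{definition:convice}, this yields two facts:
\begin{enumerate}
\item the engineer accepts that some model $\mathcal{I}$ of $\targetOnt$ satisfies $C \sqcap \neg D$;
\item the engineer does not forward $C \sqsubseteq D$ to the expert.
\end{enumerate}

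\emph{Step 2.} Recall the acceptance rule of the pipeline: an axiom is accepted only if it is forwarded to and approved by the expert, and it is rejected if the LLM strongly convinces the engineer and the axiom is not forwarded. Fact (2) says the axiom is not forwarded, so it cannot be accepted. Together with fact (1), the rejection condition is satisfied literally, so $C \sqsubseteq D$ is rejected.

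\emph{Step 3.} Identify rejection with the predicted outcome \emph{No}, i.e., a predicted negative. This is consistent semantically as well. The engineer's belief that $C \sqcap \neg D$ is satisfiable in a model of $\targetOnt$ is, by the reduction from Section~\ref{section:BK}, exactly the belief $\targetOnt \nvDash C \sqsubseteq D$. That is the negative answer to the MQ $\targetOnt \models^{?} C \sqsubseteq D$, so the predicted condition is negative.

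The main obstacle is that the definitions are informal in one place. The paper defines ``predicted positive/negative'' via the \emph{Yes}/\emph{No} response to the MQ, whereas the LLM's own \emph{Yes} concerns the counter-concept. These have opposite polarity: a \emph{Yes} from the LLM about the counter-concept corresponds to a \emph{No} on the MQ. I would make this polarity flip explicit and state that the MQ outcome is the one recorded by the pipeline's accept/reject decision. With that convention fixed, the corollary is immediate.
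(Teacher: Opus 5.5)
Your proposal is correct and follows the paper's proof. Both unfold Definition~\ref{definition:convice} to obtain non-forwarding, then apply the pipeline's decision rule, under which non-forwarding yields rejection and a predicted negative; your Step~3 detour through the subsumption-to-satisfiability reduction and your polarity remark are harmless clarifications but not needed, since the paper's argument uses only the decision rule.
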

\begin{proof}
By Definition~\ref{definition:convice}, strong conviction implies non-forwarding. By the decision rule, non-forwarding results in a predicted negative outcome and the rejection of the candidate axiom.
\end{proof}
Based on the above definitions and corollary, we establish the following theorem characterising the design property of our method. This property enables a principled trade-off between reducing the burden on domain experts through the use of LLMs and controlling the impact of LLM hallucinations on logic-based ontology construction.
\begin{theorem}
Under Assumption of correctness of domain experts~\cite{konev2018exact} ,
the proposed LLM-assisted active learning
can only incur Type II errors since every accepted axiom has been verified by a correct expert, no axiom with $\targetOnt \not\models C \sqsubseteq D$ can be accepted; thus Type I errors cannot occur.
\end{theorem}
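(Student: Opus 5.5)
The proof is a case analysis on the path a membership query can take through the LLM-assisted workflow of Section~\ref{section:LLMassistedAL}. We then check, on each path, which cells of the confusion matrix can be reached. Fix a candidate GCI $C \sqsubseteq D$ posed as a MQ. By the decision rule stated before the definitions, the predicted outcome is \emph{Yes} (predicted positive) exactly when the axiom is accepted. It is accepted only if it is forwarded to the domain expert and the expert approves it. Every other outcome is a predicted negative, i.e.\ a rejection.

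We split on whether the ontology engineer forwards $C \sqsubseteq D$ to the expert.

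\emph{Case 1: not forwarded.} This covers two sub-cases: the engineer is strongly convinced in the sense of Definition~\ref{definition:convice}, or the engineer otherwise declines to forward the query. In either sub-case, Corollary~\ref{coro:negativeConvinces} (or the decision rule directly) gives a predicted negative outcome. A Type I error requires a predicted positive, so none can arise here. If in addition $\targetOnt \models C \sqsubseteq D$, this is exactly a Type II error. That is the only kind of error this branch can produce, and it is where LLM hallucinations enter: the LLM may have offered a spurious instance of $C \sqcap \neg D$.

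\emph{Case 2: forwarded.} This happens when no real-world case was found, or when the engineer remains uncertain and sends the query along with the LLM's advice. The expert then answers $\targetOnt \models^{?} C \sqsubseteq D$. Under the correctness assumption of~\cite{konev2018exact}, the expert answers \emph{Yes} if and only if $\targetOnt \models C \sqsubseteq D$. So a predicted positive occurs only when the axiom is entailed, and a predicted negative only when it is not. Neither error type arises in this branch.

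Combining the two cases: an axiom with $\targetOnt \not\models C \sqsubseteq D$ is either rejected in Case 1 or rejected by the correct expert in Case 2, so it is never accepted. Hence the false-positive cell of the confusion matrix is empty, and only Type II errors remain possible.

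The main obstacle is making the decision rule airtight. We must argue that acceptance has no path other than expert approval; in particular, neither the LLM's \emph{Yes}/\emph{No} output nor the engineer alone can accept an axiom. We also need the expert's answer to be unaffected by the attached LLM advice, which is exactly what the correctness assumption guarantees. I would state this explicitly as the formalised procedure: the set of accepted axioms is a subset of the set of expert-approved axioms. Once that is in place, the rest is bookkeeping of cases.
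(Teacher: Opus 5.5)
Your proposal is correct and follows essentially the same argument as the paper. Expert correctness makes the outcome of every forwarded query match the true entailment status, while non-forwarded queries are rejected via Corollary~\ref{coro:negativeConvinces} and so can only produce Type II errors; the paper attributes non-forwarding solely to strong conviction, so your extra sub-case of non-forwarding without strong conviction and your explicit treatment of the forwarded branch are harmless refinements of the same case split.
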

\begin{proof}
Under Assumption, domain experts answer MQ correctly.
Hence, whenever a MQ is posed to an expert, its outcome coincides with the actual entailment status of the candidate axiom with respect to $\targetOnt$.

In the proposed framework, a candidate axiom $C \sqsubseteq D$ is not forwarded to the expert only if the LLM strongly convinces the ontology engineer.
In this case, the predicted outcome is negative and the axiom is rejected based on Corollary~\ref{coro:negativeConvinces}. If the rejection is correct, no error occurs. If $\targetOnt \models C \sqsubseteq D$ holds,
the rejection constitutes a Type II error. Since all positive outcomes require expert verification,
Type I errors are excluded.
Therefore, only Type II errors may arise.
\end{proof}

\subsection{Prompt Design for Considering Unknown Cases}
 A recent report by $\OAI$~\cite{kalai2025language} shows that the training and evaluation procedures of LLMs tend to reward guessing over acknowledging uncertainty. As a result, models are encouraged to produce confident answers rather than responding with ``unknown”, which can in turn lead to hallucinations. In our method, although the impact of LLM hallucinations is carefully considered and mitigated, the underlying training objectives of LLMs may still produce conservative yet extreme outputs that satisfy the formal constraints. For a MQ of the form $\targetOnt \models^{?} Apple \sqsubseteq Fruit$, We expect the LLM to respond negatively, indicating that no real-world case satisfies the counter-concept $Apple \sqcap \neg Fruit$. However, as illustrated in Figure~\ref{fig:appleExample}, LLMs such as ChatGPT~\footnote{\url{https://chatgpt.com/}}
 may return extreme yet semantically admissible examples that exploit lexical ambiguity to satisfy the given constraints.

\begin{figure}[t]
\centering
\includegraphics[width=\columnwidth]{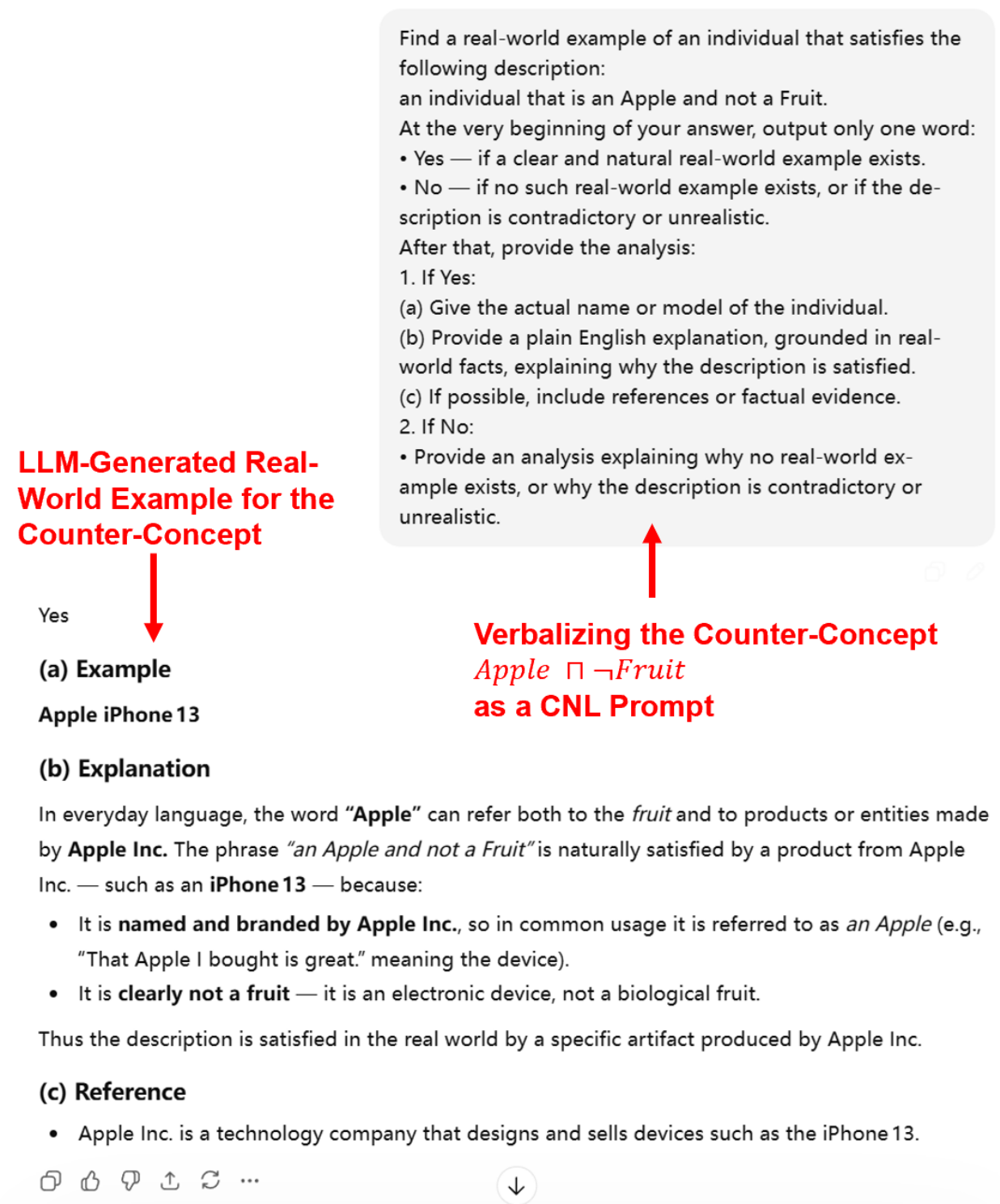}
\caption{\label{fig:appleExample} LLM-Generated Real-World Example for the Counter-Concept $Apple \sqcap \neg Fruit$.}
\vspace{-1.5em}
\end{figure}

From the perspective of background knowledge, examples generated by exploiting lexical ambiguity can be beneficial, as they expose semantic gaps that may otherwise remain implicit. In the example discussed above, such an extreme but admissible output makes the ontology engineer aware of a potential ambiguity between the lexical sense of Apple and the botanical notion of an apple, thereby prompting a more careful formulation of MQ.

At the same time, such examples introduce a risk. In domains where ontology engineers are less familiar with specialised domain knowledge, extreme but admissible outputs may appear plausible and lead to the rejection of candidate queries before they are forwarded to domain experts. In our framework, this does not introduce inconsistencies; it only delays ontology construction. Nevertheless, we introduce an alternative prompt formulation better aligned with the OWA and more suitable for complex domains. Specifically, we design an advanced variant of our prompt that explicitly emphasises the notion of ``unknown” in natural language, discouraging over-committed interpretations while preserving the construction of semantically admissible outputs.

\begin{tcolorbox}[
  breakable, 
  colback=white,
  colframe=blue!75!black,
  coltitle=white,
  colbacktitle=blue!75!black,
  title=Advanced SAT-inspired Prompt,
  fonttitle=\bfseries,
  boxrule=0.8pt,
  left=6pt,
  right=6pt,
  top=6pt,
  bottom=6pt
]

\small
\begin{quote}
\textbf{SAT-inspired Prompt}($\cnl_{\mathcal{V}(F)}$)
\end{quote}

Important rules:
\begin{itemize}
  \item Only provide a real-world example if it clearly and naturally satisfies the given description.
  \item If no real-world example exists, or if the description is contradictory or unrealistic, state this explicitly.
  \item When in doubt, it is preferable to explain why no suitable example exists rather than to construct an artificial one.
\end{itemize}

\end{tcolorbox}



%% file: Evaluation.tex
\section{Evaluation Methods and Setting}
\subsection{Research Questions}
Since our method can only produce Type II errors, recall is the most relevant metric for evaluating its performance, as defined below.
\[
\mathrm{Recall} = \frac{\mathrm{True\ Positive}}{\mathrm{True\ Positive} + \mathrm{False\ Negative}}
\]
In the evaluation, we treat axioms from existing, long-maintained ontologies as the gold standard. There are two assumptions in our experimental setting:
\begin{itemize}
    \item We assume that the existing axioms in these ontologies were proposed by ontology engineers and subsequently verified by domain experts.
    \item If the LLM provides a real-world example for the counter-concept, our framework assumes that this leads to strong conviction. This constitutes an extreme modelling assumption in which the ontology engineer does not prevent a Type II error, although, within our setting, consultation with domain experts remains possible.
\end{itemize}

To evaluate our approach, we treat these gold-standard axioms as candidate axioms and apply both the SAT-inspired prompt and its advanced version. The evaluation is conducted by analysing whether the LLM produces a real-world example for the corresponding counter-concept. If the LLM provides such an example, then, under the second assumption, strong conviction is triggered and the candidate axiom is not forwarded to the domain expert. Since, by construction, the axiom is entailed by the target ontology, this outcome is counted as a false negative. If the LLM does not provide such an example, the candidate axiom would be forwarded to the domain expert and correctly accepted. This case is therefore counted as a true positive.

In this section, we report on the empirical evaluation of our approach. More specifically, we aim to address the following Research Questions(RQ):
\begin{enumerate}[label=\textbf{RQ\arabic*.}, leftmargin=*, align=left]
\item What is the recall of our approach? This question helps us understand the extent to which LLM hallucinations lead to Type II errors.
\item Is there a difference in recall between the two prompt designs? This question examines whether encouraging unknown responses affects the performance of our approach.
\item What is the computational cost of the LLMs and of encoding ontologies into prompts? This question assesses the practical feasibility of applying our approach in industrial settings.
\end{enumerate}

\subsection{Setting}
Two widely used, long-term developed and maintained ontologies, $\go$ (version 2025-07-22) and $\fma$ (version 5.0.0) obtained from BioPortal\footnote{\url{www.bioportal.bioontology.org}}, are used as the experimental corpus in our study. Thirteen commercial LLMs are used in our experiments to evaluate the proposed method under the chosen metric. These models include \amazonNovaLit, \amazonNovaMicro, \deepseekThirtyOne, \deepseekThirtyTwo, \geminiTwoFiveLite, \geminiTwoFive, \openAIFourZero, \openAINanoFive, \openAIFive, \openAIoss, \qwenPlus, \qwenTurbo, \zAI. These models are drawn from six major LLM families: $\amazon$~\footnote{\url{www.aws.amazon.com/nova}}, $\DS$~\footnote{\url{www.deepseek.com}}, $\OAI$~\footnote{\url{www.openai.com}}, $\QWEN$~\footnote{\url{qwen.ai}}, $\Gemini$~\footnote{\url{gemini.google.com}}, $\glm$~\footnote{\url{chat.z.ai}}. The implementation of encoding $\shiqD$ ontologies into CNL prompts is based on the OWL API~\cite{horridge2011owl}, version~5.1.10.

All models were accessed via \textsf{OpenRouter} using default API parameters (e.g., default temperature and top-$p$), without task-specific tuning. This setup evaluates the zero-shot testing of the models rather than optimised performance for the specific task. Queries were submitted in batched mode rather than streaming. For each ontology, the first 1,000 axioms were selected, resulting in 2,000 candidate axioms in total due to computational cost constraints. Axioms were selected in file order. In future work, selection will be randomised and stratified by constructor profile to mitigate potential ordering bias.

\subsection{Results and Analysis}
The results are reported in Table~\ref{tab:runtime-recall-go-fma}. In the table, AvgT$_1$ and AvgT$_2$ denote the average response time per prompt when using the SAT-inspired prompt and the advanced SAT-inspired prompt, respectively. All response times are measured in seconds (s). AllT denotes the total time required by each LLM to process all 2000 prompt queries, measured in hours (h). Recall$_1$ and Recall$_2$ record the recall of our approach under the two prompt designs. Finally, Impro denotes the change in recall, computed as Recall$_2 -$ Recall$_1$.
\begin{table*}[t]
\centering
\small
\setlength{\tabcolsep}{6pt}
\renewcommand{\arraystretch}{1.1}
\caption{Runtime and recall comparison on GO and FMA ontologies. For each metric, the best result is highlighted in \goodColor{red}, while the worst result is highlighted in \badColor{blue}.}
\begin{tabular*}{0.85\textwidth}{@{\extracolsep{\fill}}lrrrrrl}
\toprule
\textbf{Model} 
& \textbf{AvgT$_1$ (s)} 
& \textbf{AvgT$_2$ (s)} 
& \textbf{AllT (h)} 
& \textbf{Recall$_1$} 
& \textbf{Recall$_2$} 
& \textbf{Impro} \\
\midrule
\multicolumn{7}{c}{$\underline\go$} \\
\addlinespace[0.2em]
\midrule
\amazonNovaLit & 3.62 & 3.57 & 2.00 & 85.4 & 88.8 & $+3.4$ \\
\amazonNovaMicro & \goodColor{2.22} & 2.14 & \goodColor{1.18} & \goodColor{99.9} & \goodColor{100} & $+0.1$ \\
\midrule
\deepseekThirtyOne & 13.49 & 12.01 & 7.08 & 56.4 & 76.4 & $+20$ \\
\deepseekThirtyTwo & 25.48 & 20.51 & 12.78 & 60.7 & 86.4 & \goodColor{$+25.7$} \\
\midrule
\geminiTwoFiveLite & \goodColor{2.22} & 2.28 & 1.25 & 88.3 & 89 & $+0.7$ \\
\geminiTwoFive & 3.70 & 3.62 & 2.03 & 69.7 & 76.2 & $+6.5$ \\
\midrule
\openAIFourZero & 2.38 & \goodColor{2.01} & 1.22 & \goodColor{99.9} & \goodColor{100} & $+0.1$ \\
\openAINanoFive & 29.11 & 29.05 & 16.16 & 70.7 & 75.8 & $+5.1$ \\
\openAIFive & \badColor{49.62} & \badColor{47.33} & \badColor{26.93} & 56.8 & 60.3 & $+3.5$ \\
\openAIoss & 15.67 & 15.19 & 8.57 & 90.2 & 89.9 & \badColor{$-0.3$} \\
\midrule
\qwenPlus & 10.17 & 10.17 & 5.65 & 73 & 75.1 & $+2.1$ \\
\qwenTurbo & 8.55 & 8.33 & 4.69 & \badColor{0.01} & \badColor{3.6} & $+3.5$ \\
\midrule
\zAI & 19.63 & 21.45 & 11.41 & 52.4 & 66.5 & $+14.1$ \\

\midrule
\multicolumn{7}{c}{$\underline

\fma$} \\
\addlinespace[0.2em]
\midrule
\amazonNovaLit & 2.86 & 2.80 & 1.57 & 98.3 & 98.9 & $+0.6$ \\
\amazonNovaMicro & 2.65 & 2.61 & 1.46 & 99.1 & 99.7 & $+0.6$ \\
\midrule
\deepseekThirtyOne & 11.18 & 11.02 & 6.17 & 92.2 & 96.6 & $+4.4$ \\
\deepseekThirtyTwo & 18.64 & 17.19 & 9.95 & 96.6 & 99.8 & $+3.2$ \\
\midrule
\geminiTwoFiveLite & \goodColor{2.51} & \goodColor{2.49} & \goodColor{1.39} & 99.3 & 99.1 & $-0.2$ \\
\geminiTwoFive & 3.41 & 3.51 & 1.92 & 92.7 & 96.3 & $+3.6$ \\
\midrule
\openAIFourZero & 3.95 & 4.03 & 2.22 & \goodColor{99.8} & \goodColor{99.9} & $+0.1$ \\
\openAINanoFive & 30.93 & 30.12 & 16.96 & 97 & 97.9 & $+0.9$ \\
\openAIFive & \badColor{41.15} & \badColor{37.82} & \badColor{21.93} & 67.9 & 75.4 & $+7.5$ \\
\openAIoss & 10.84 & 11.10 & 6.10 & 96.2 & 94.2 & \badColor{$-2$} \\
\midrule
\qwenPlus & 8.58 & 8.13 & 4.64 & 96.7 & 98.2 & $+1.5$ \\
\qwenTurbo & 7.70 & 7.38 & 4.19 & \badColor{0} & \badColor{1.4} & $+1.4$ \\
\midrule
\zAI & 14.64 & 14.66 & 8.14 & 73.8 & 81.9 & \goodColor{$+8.1$} \\
\bottomrule
\end{tabular*}
\label{tab:runtime-recall-go-fma}
\end{table*}

To answer \textbf{RQ1}, we observe that the majority of LLMs achieve strong recall performance. Specifically, among the 13 evaluated LLMs, 12 models attain recall values above 50\% on both the $\go$ and $\fma$ corpora, under both the SAT-inspired prompt and its advanced variant. Notably, \amazonNovaMicro\ and \openAIFourZero\ achieve recall rates exceeding 99\% across both ontologies and under both prompt settings. Nevertheless, there remain LLMs with poor performance. In particular, \qwenTurbo\ exhibits extremely low recall when using the SAT-inspired prompt, achieving 0\% and 0.01\% on the GO and FMA corpora, respectively. Even with the advanced prompt, its recall does not exceed 5\%, reaching only 3.6\% and 1.4\% on the two ontologies. Despite variability in individual model performance, the average recall across 13 LLMs under zero-shot evaluation (i.e., without task-specific fine-tuning) is 60.49\% for the SAT-inspired prompt and 76\% for its advanced variant on $\go$, and 85.35\% and 87.64\%, respectively, on $\fma$. These results indicate that, despite variability in individual model performance, our method demonstrates strong potential for leveraging the outputs of commercial LLMs as a supporting mechanism for ontology modelling.

We observe no consistent relationship between model size and recall performance in our evaluation. In particular, larger or more recent LLMs do not necessarily yield higher recall. For example, \openAIFive\ is both larger and more recent than \openAIFourZero, yet it exhibits substantially lower recall. The largest decrease is observed for Recall$_1$ on the $\go$ corpus, where \openAIFive\ performs 43.1\% worse than \openAIFourZero, while the smallest decrease occurs for Recall$_2$ on the $\fma$ corpus, with a reduction of 24.5\%. Given the characteristics of our method, models with weaker answering capabilities, such as earlier or less capable LLMs, may consistently respond that no counterexample can be found across queries, resulting in uniformly low recall. At the same time, there are also cases where larger and more capable models achieve markedly better recall. For instance, \qwenPlus, which is substantially larger than \qwenTurbo, shows a pronounced improvement in recall. On the $\go$ corpus, recall increases by more than 70\% under both prompt settings, while on the $\fma$ corpus the improvement exceeds 95\% for both prompts. It should be noted that this large relative gain is partly attributable to the extremely low baseline recall achieved by \qwenTurbo.

To answer \textbf{RQ2}, we observe that the advanced prompt improves recall for the majority of LLMs. The average improvement is 6.5\% on $\go$ and 2.28\% on $\fma$ across the 13 evaluated models. Specifically, 12 out of the 13 evaluated LLMs show improved recall on the GO corpus, while 11 out of 13 exhibit improvements on the FMA corpus when using the advanced prompt. In most cases, however, the magnitude of improvement is modest. Notable gains exceeding 10\% on the GO corpus are observed only for \deepseekThirtyOne, \deepseekThirtyTwo, and \zAI, with improvements of 20\%, 25.7\%, and 14.1\%, respectively. There are three cases where the improvement is negative: \openAIoss\ on both the $\go$ and $\fma$ corpora, and \geminiTwoFiveLite\ on the $\fma$ corpus. Among these, only \openAIoss\ on the $\fma$ corpus exhibits a decrease greater than 1\%, amounting to a 2\% reduction. Importantly, in all three cases the recall remains high, exceeding 85\%, indicating that the negative changes occur at already strong baseline performance levels. Overall, these results suggest that the advanced prompt design is beneficial in most settings and can serve as a useful reference for improving similar methods in future work.

To answer \textbf{RQ3}, we find that, overall, LLMs require relatively little time to respond to queries about whether a candidate axiom has a counterexample. Across all 13 models and both corpora, the average response time per query is under one minute for every model. Moreover, there is no clear difference in average response time between the SAT-inspired prompt and the advanced prompt, indicating that the additional prompt refinements do not introduce noticeable computational overhead on either corpus. Among the evaluated models, \openAIFive\ exhibits the longest average response time per query, reaching 49.62 and 47.33 seconds on the $\go$ corpus under the SAT-inspired and advanced prompts, respectively, and 41.15 and 37.82 seconds on the $\fma$ corpus under the two prompt settings. In contrast, \amazonNovaMicro, \geminiTwoFiveLite, and \openAIFourZero\ consistently achieve the shortest average response times in either AvgT$_1$ or AvgT$_2$. Notably, \openAIFourZero\ also attains the best recall performance on both corpora. This observation suggests that increased computational time does not necessarily lead to improved recall for our method on either the $\go$ or $\fma$ corpora.

\subsection{Limitation and Future Work}
In this paper, there are still several limitations that can be further considered in the future work. 
\subsubsection{Limited Coverage of $\shiqD$ Expressivity}
Although our method is designed for the DL $\shiqD$, the distribution of axioms in our experimental corpora is as follows. In $\go$, the selected axioms are of the form $A \equiv B \sqcap \exists R.C$. Such axioms can be decomposed into two GCIs, namely $A \sqsubseteq B \sqcap \exists R.C$ and $B \sqcap \exists R.C \sqsubseteq A$. Accordingly, 500 equivalence axioms were selected and rewritten into 1000 GCIs, which were then used in our method. In $\fma$, most of the selected axioms (986 out of 1000) are already in GCI form. The remaining 7 equivalence axioms were rewritten into two GCIs each. In the experiments, each GCI is treated as a separate candidate axiom.The 1000 selected axioms exhibit diverse syntactic forms, and some include datatype restrictions. However, the corpus lacks axioms involving inverse role and qualified number restrictions~\footnote{With respect to value restrictions, the construction of the counter-concept introduces universal restrictions. For a candidate axiom $A \sqsubseteq \exists R.C$, the counter-concept is $A \sqcap \forall R.(\neg C)$.}. In this study, the primary objective is to evaluate our method across multiple commercial LLMs, which incurs substantial commercial computational cost. As a result, we do not attempt to cover the full expressivity of $\shiqD$ by increasing the number of axioms. Future work will evaluate a larger ontology corpus using a locally deployed LLM in order to investigate expressivity-related aspects in greater depth.

\subsubsection{Application to Industrial Ontology Modelling}
Our method is specifically designed for ontology modelling. In this work, we design our own evaluation protocol, while using existing, long-maintained ontologies as the gold standard. Future work will involve ontology engineers and domain experts in real-world ontology development scenarios in order to assess the practical applicability of the proposed approach.

%% file: Conclusion.tex
\section{Conclusion}
Active learning provides a structured and principled approach to ontology learning, yet it may impose significant burden on domain experts, particularly given the non-trivial semantics of OWL. At the same time, while LLMs offer strong generative capabilities that can complement background knowledge, their lack of soundness and completeness makes their direct integration into logic-based ontology construction problematic. We propose an LLM-assisted active learning framework in which the LLM is introduced as a third component between ontology engineers and domain experts. Inspired by the reduction from subsumption testing to satisfiability in DLs, candidate axioms are reformulated into counter-concepts and verbalised via CNL. Under this design, LLM outputs can only induce Type II errors, thereby preventing the introduction of incorrect axioms that could lead to unintended entailments and erroneous reasoning outcomes in downstream tasks. Experiments on two long-maintained ontologies and thirteen commercial LLMs indicate that the observed recall levels correspond to a limited frequency of Type II errors for several models. Since such errors only delay ontology construction without affecting the correctness of derived entailments, the results support the feasibility of incorporating LLMs into active learning in a controlled way. This work outlines a principled pathway for integrating LLMs into ontology engineering in industrial applications, while balancing the effort required from ontology engineers and domain experts.